\newif\ifisarxiv
\pgfplotsset{compat=newest}
\newcounter{loopcntr}
\newcommand{\ofsub}[1]{\mbox{\small \raisebox{0.0pt}{$(#1)$}}}
\newcommand{\of}[2]{{#1{\!\ofsub{#2}}}}
\newcommand{\fof}[2]{{#1({#2})}}
\newcommand{\yof}[2]{{#1{\ofsub{#2}}}}
\newcommand{\Sm}{{S_{-i}}}
\newcommand{\Sp}{{S_{+i}}}
\newcommand{\BlackBox}{\rule{1.5ex}{1.5ex}}  
\DeclareMathOperator*{\argmin}{\mathop{\mathrm{argmin}}}
\def\x{\mathbf x}
\def\y{\mathbf y}
\def\ybh{\widehat{\mathbf y}}
\def\yh{\widehat{y}}
\def\w{\mathbf w}
\def\v{\mathbf v}
\def\e{\mathbf e}
\def\X{\mathbf X}
\def\Xs{\widetilde{\X}}
\def\C{\mathbf C}
\def\F{\mathbf F}
\def\Z{\mathbf Z}
\def\I{\mathbf I}
\def\E{\mathbb E}
\def\R{\mathbb R} 
\def\tr{\mathrm{tr}}
\def\rank{\mathrm{rank}}
\def\Var{\mathrm{Var}}
\def\XinvS{(\X_S\X_S^\top)^{-1}}
\newcommand{\defeq}{\stackrel{\text{\tiny{def}}}{=}}
\definecolor{silver}{cmyk}{0,0,0,0.3}
\definecolor{yellow}{cmyk}{0,0,0.9,0.0}
\definecolor{reddishyellow}{cmyk}{0,0.22,1.0,0.0}
\definecolor{black}{cmyk}{0,0,0.0,1.0}
\definecolor{darkYellow}{cmyk}{0.2,0.4,1.0,0}
\definecolor{darkSilver}{cmyk}{0,0,0,0.1}
\definecolor{grey}{cmyk}{0,0,0,0.5}
\newcommand{\Blue}[1]{\color{blue}{#1}\color{black}}
\newcommand{\Brown}[1]{{\color{brown}{#1}\color{black}}}
\newenvironment{proof}{\par\noindent{\bf Proof\ }}{\hfill\BlackBox\\[2mm]}
\newtheorem{theorem}{Theorem}
\newtheorem{example}{Example}
\newtheorem{lemma}[theorem]{Lemma}
\newtheorem{proposition}[theorem]{Proposition}
\newtheorem{remark}[theorem]{Remark}
\newtheorem{corollary}[theorem]{Corollary}
\newtheorem{definition}{Definition}
\newtheorem{conjecture}[theorem]{Conjecture}
\newtheorem{claim}[theorem]{Claim}
\title{Unbiased estimates for linear regression\\
 via volume sampling}
\author{
Micha{\l } Derezi\'{n}ski\\
Department of Computer Science\\
University of California Santa Cruz\\
\texttt{mderezin@ucsc.edu}\\
\And
Manfred K. Warmuth\\
Department of Computer Science\\
University of California Santa Cruz\\
\texttt{manfred@ucsc.edu}
}
\begin{document}

\maketitle

\begin{abstract}
Given a full rank matrix $\X$ with more columns than rows,
consider the task of estimating the pseudo inverse $\X^+$ based 
on the pseudo inverse of a sampled subset of columns
(of size at least the number of rows). We show that this is possible
if the subset of columns is chosen proportional to the squared volume
spanned by the rows of the chosen submatrix (ie, volume sampling).
The resulting estimator is unbiased and
surprisingly the covariance of the estimator also has a
closed form: It equals a specific factor times
$\X^+\X^{+\top}$.

Pseudo inverse plays an important part in solving the linear least
squares problem, where we try to predict a label for each column of
$\X$. We assume labels are expensive and we are
only given the labels for the small subset of columns we sample
from $\X$. Using our methods we show that
the weight vector of the solution for the sub problem
is an unbiased estimator of the
optimal solution for the whole problem based on all 
column labels. 

We believe that these new formulas establish a fundamental
connection between linear least squares and volume sampling.
We use our methods to obtain an algorithm for volume
sampling that is faster than state-of-the-art and for obtaining
bounds for the total loss of the estimated least-squares solution on
all labeled columns. 
\end{abstract}



\section{Introduction}
\label{sec:introduction}

\begin{wrapfigure}{r}{0.2\textwidth}
\vspace{-.7cm}
\mbox{\footnotesize $\X$}\hfill
\begin{tikzpicture}[scale=0.4,baseline=(current bounding box.center)]
    \draw [fill=brown!30] (0,0) rectangle (4,2);
    \draw [color=black] (2.5,0) -- (2.5,2);
    \draw (2,1) node {\mbox{\footnotesize $\x_i$}}; 
\end{tikzpicture} 
\vspace{.05cm}

\mbox{\footnotesize $\I_S$}\hfill
\begin{tikzpicture}[scale=0.4,baseline=(current bounding box.center)]
    \draw (0,0) rectangle (4,3.9);
    \draw [color=blue] (.45,3.5) -- (2.7,1.3);
    \draw (1.75,3) node {\mbox{\footnotesize $\Blue{S}$}}; 
\end{tikzpicture} 
\vspace{.05cm}

\mbox{\footnotesize $\X\I_S$} \hfill
\begin{tikzpicture}[scale=0.4,baseline=(current bounding box.center)]
    \draw (0,0) rectangle (4,2);
    \draw[fill=blue!30] (.5,0) rectangle (2.7,2);
    \draw (1.55,.5) node {\mbox{\footnotesize $\Blue{\X_S}$}};
\end{tikzpicture} 
\vspace{.03cm}

\mbox{\footnotesize $\X^{+\top}$}\hfill
\begin{tikzpicture}[scale=0.4,baseline=(current bounding box.center)]
    \draw [fill=brown!30] (0,0) rectangle (4,2);
\end{tikzpicture} 
\vspace{.05cm}

\mbox{\footnotesize$\!(\!\X\I_S\!)^{\!+\!\top}\!$}\hfill
\begin{tikzpicture}[scale=0.4,baseline=(current bounding box.center)]
    \draw (0,0) rectangle (4,2);
    \draw[fill=blue!30] (.5,0) rectangle (2.7,2);
    \draw (1.55,.5) node {\mbox{\footnotesize $\;\Blue{(\!\X_S\!)^{\!+\!\top}}$}};
\end{tikzpicture} 
\vspace{-.1cm}
  \caption{Set $S$ may not be consecutive.}
  \label{f:shapes}
\vspace{-1.3cm}
\end{wrapfigure}
Let $\X$ be a wide full rank matrix 
with $d$ rows and $n$ columns where $n \ge d$.
Our goal is to estimate the pseudo inverse $\X^+$ of $\X$ 
based on the pseudo inverse of a subset of columns. More precisely, 
we sample a subset $S\subseteq \{1..n\}$ of $s$ column indices
(where $s\geq d$). We let $\X_S$ be the sub-matrix of the $s$
columns indexed by $S$ (See Figure \ref{f:shapes}).
Consider a version of $\X$ in which
all but the columns of $S$ are zero. 
This matrix equals $\X\I_S$ where $\I_S$ is an $n$-dimensional diagonal 
matrix with $(\I_S)_{ii}=1$ if $i\in S$ and 0 otherwise.

We assume that the set of $s$ column indices of $\X$ is selected 
proportional to the squared volume spanned by the rows of submatrix
$\X_S$, i.e.  proportional to $\det(\X_S\X_S^\top)$ 
and prove a number of new surprising expectation formulas for this type of volume
sampling, such as
$$
\E[(\X\I_S)^+]=\X^+ \quad \text{and}\quad
\E[\underbrace{(\X_S\X_S^\top)^{-1}}_
{(\X\I_S)^{+\top}(\X\I_S)^+} ]= \frac{n-d+1}{s-d+1}\,
\X^{+\top}\X^+.$$
Note that $(\X\I_S)^+$ has the $n\times d$ shape of $\X^+$
where the $s$ rows indexed by $S$ contain $(\X_S)^+$ and the
remaining $n-s$ rows are zero.
The expectation of this matrix is $\X^+$ even though
$(\X_S)^+$ is clearly not a sub-matrix of $\X^+$.
In addition to the
expectation formulas, our new techniques lead to
an efficient volume sampling procedure which beats the state-of-the-art
by a factor of $n^2$ in time complexity.

Volume sampling is useful in numerous applications, from clustering to
matrix approximation, but we focus on the task of solving
linear least squares problems: For an $n-$dimensional
label vector $\y$, let $\w^*=\argmin_\w ||\X^\top\w-\y||^2=\X^+\y$.
Assume the entire design matrix $\X$ is
known to the learner but labels are expensive 
and you want to observe as few of them as possible. 
Let $\of{\w^*}S=(\X_S)^+\y_S$ be the solution to the sub-problem based 
on labels $\y_S$. 
What is the smallest number of labels $s$ necessary, for
which there is a sampling procedure on sets $S$ of size
$s$ st the expected loss of $\of{\w^*}S$ is at most a
constant factor larger than the loss of $\w^*$ that uses all $n$
labels (where the constant is independent of $n$)?
More precisely, using the short hand $L(\w)=||\X^\top\w-\y||^2$
for the loss on all $n$ labels, what is the smallest size $s$ such that
$\E[L(\of{\w^*}S)]\le \text{const}\, L(\w^*)$. This question is a version of
the ``minimal coresets'' open problem posed in \cite{coresets-regression}.

The size has to be at least $d$ and
one can show that randomization is necessary
in that any deterministic algorithm for choosing a set of
$d$ columns can suffer loss larger by a factor of $n$.
Also any iid sampling of $S$ (such as the commonly used leverage scores
\cite{fast-leverage-scores})
requires at least $\Omega(d \log d)$ examples to achieve a finite factor.
In this paper however we show that with a size $d$ volume
sample,
$\E[L(\of{\w^*}S)]=(d+1)L(\w^*)$ if $\X$ is in general position.
Note again that we have equality and not just an upper
bound. Also we can show that the multiplicative factor $d+1$ is optimal.
We further improve this factor to
$1+\epsilon$ via repeated volume sampling. Moreover, our
expectation formulas imply that when $S$ is size $s\ge d$ volume 
sampled, then $\of{\w^*}S$ is an unbiased estimator for
$\w^*$, ie $\E[\of{\w^*}S]=\w^*$.

\vspace{-.1cm}
\section{Related Work}
\label{sec:related-work}
Volume sampling is an extension of a determinantal point
process \cite{dpp}, which has been given a lot of attention in the literature
with many applications to machine learning, including
recommendation systems \cite{dpp-shopping} and clustering
\cite{dpp-clustering}. Many exact and approximate methods for efficiently
generating samples from this distribution have been proposed
\cite{efficient-volume-sampling,k-dpp}, making it a useful tool in
the design of randomized algorithms. Most of those methods focus on
sampling $s\leq d$ elements. In this paper, we study volume sampling
sets of size $s\geq d$, which has been proposed in
\cite{avron-boutsidis13} and motivated with applications in graph
theory, linear regression, matrix approximation and more. The only
known polynomial time algorithm for size $s>d$ volume sampling was
recently proposed in \cite{dual-volume-sampling} with time complexity
$O(n^4 s)$. We offer a new algorithm with runtime $O((n-s+d)nd)$,
which is faster by a factor of at least $n^2$.  

The problem of selecting a subset of input vectors for solving a linear
regression task has been extensively studied in statistics literature
under the terms {\em optimal design} \cite{optimal-design-book} and
{\em pool-based active learning}
\cite{pool-based-active-learning-regression}. Various 
criteria for subset selection have been proposed, like A-optimality
and D-optimality. For example, A-optimality seeks to minimize
$\tr((\X_S\X_S^\top)^{-1})$, which is combinatorially hard to optimize
exactly. 
We show that for size $s$ volume sampling (for $s\geq d$), 
$\E[(\X_S\X_S^\top)^{-1}] = \frac{n-d+1}{s-d+1}\,
\X^{+\top}\X^+$ which provides an approximate
randomized solution for this task.

A related task has been explored in the field of computational
geometry, where efficient algorithms are sought for approximately
solving linear regression and matrix approximation
\cite{randomized-matrix-algorithms,
  regression-input-sparsity-time,coresets-regression}. Here, 
subsampling appears as one of the key 
techniques for obtaining multiplicative bounds on the loss of the
approximate solution. In this line of work, volume sampling
size $s\leq d$ has been used by
\cite{pca-volume-sampling,more-efficient-volume-sampling} for 
matrix approximation. Another common sampling technique is based on
statistical leverage scores \cite{fast-leverage-scores}, which have
been effectively used for the task of linear regression. However, this
approach is based on iid sampling, and requires 
sampling at least $\Omega(d\log d)$ elements to achieve multiplicative
loss bounds. On the other hand, the input vectors obtained from volume
sampling are selected jointly, which makes the chosen subset more
informative, and we show that just $d$ volume sampled elements are
sufficient to achieve a multiplicative bound. 

\ifisarxiv
\vspace{-2mm}
\fi
\section{Unbiased estimators}
\label{sec:pseudo-inverse}
\ifisarxiv
\vspace{-1mm}
\fi
Let $n$ be an integer dimension.
For each subset $S\subseteq \{1..n\}$ of size $s$ we are given a 
matrix formula $\fof{\F}S$.
Our goal is to sample set $S$ of size $s$ using some
sampling process and then develop concise expressions for
$\E_{S:|S|=s}[\fof{\F}S]$. Examples of formula
classes $\fof{\F}S$ will be given below.

We represent the sampling by a directed acyclic graph (dag), with a
single root node corresponding to the full set $\{1..n\}$, Starting
from the root, we proceed along the edges of the graph, 
iteratively removing elements from the set $S$. 
Concretely, consider a dag with levels $s = n, n-1, ..., d$.
Level $s$ contains $n \choose s$ nodes 
for sets $S\subseteq \{1..n\}$ of size $s$. 
Every node $S$ at level $s>d$ has $s$ directed edges
to the nodes $S-\{i\}$ at the next lower level. 
These edges are labeled with a
conditional probability vector $P(S_{-i}|S)$.
The probability of a (directed) path is the product of the
probabilities along its edges.
The outflow of probability from each node on
all but the bottom level is 1. 
We let the probability
$P(S)$ of node $S$ be the probability of all paths 
from the top node $\{1..n\}$ to $S$ and set the
probability $P(\{1..n\})$ of the top node to 1.
We associate a formula $\fof{\F}S$ with each set node $S$ in the
dag. The following key equality lets us compute
expectations.

\begin{lemma}
\label{l:key}
If for all $S\subseteq \{1..n\}$ of size greater than $d$ we have
$$\Blue{\fof{\F}{S}=\sum_{i\in S} P(\Sm|S)\fof{\F}{S_{-i}}},$$
then for any $s\in\{d..n\}$: $\;\;\E_{S:|S|=s} [\fof{\F}{S}]
=\sum_{S:|S|=s} P(S)\fof{\F}{S} = \fof{\F}{\{1..n\}}.$
\end{lemma}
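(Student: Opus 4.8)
The plan is to prove this by backward induction on the level $s$, starting from $s = n$ and descending to $s = d$. The base case $s = n$ is immediate: the only set of size $n$ is $\{1..n\}$ itself, and $P(\{1..n\}) = 1$ by definition, so $\E_{S:|S|=n}[\fof{\F}{S}] = P(\{1..n\})\fof{\F}{\{1..n\}} = \fof{\F}{\{1..n\}}$. The induction hypothesis is that $\sum_{S:|S|=s+1} P(S)\fof{\F}{S} = \fof{\F}{\{1..n\}}$ for some level $s+1$ with $s+1 \le n$ and $s \ge d$, and I want to push this down to level $s$.

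For the inductive step, I would start from $\sum_{S:|S|=s} P(S)\fof{\F}{S}$ and use the hypothesis of the lemma to rewrite each term $\fof{\F}{S}$ as $\sum_{i\in S} P(\Sm|S)\fof{\F}{S_{-i}}$ — wait, that goes the wrong direction. Instead, I would apply the lemma's hypothesis at level $s+1$: each $T$ with $|T| = s+1$ satisfies $\fof{\F}{T} = \sum_{j\in T} P(T_{-j}|T)\fof{\F}{T_{-j}}$. Substituting into the induction hypothesis gives
$$
\fof{\F}{\{1..n\}} = \sum_{T:|T|=s+1} P(T)\sum_{j\in T} P(T_{-j}|T)\,\fof{\F}{T_{-j}}
= \sum_{T:|T|=s+1}\ \sum_{j\in T} P(T)\,P(T_{-j}|T)\,\fof{\F}{T_{-j}}.
$$
Now I recognize $P(T)P(T_{-j}|T)$ as the probability of the set of paths from the root to $T$ followed by the edge $T \to T_{-j}$. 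The key combinatorial step is to re-index this double sum by the resulting set $S = T_{-j}$ at level $s$: for each fixed $S$ with $|S| = s$, the pairs $(T, j)$ with $T_{-j} = S$ are exactly $T = S \cup \{j\}$ for $j \notin S$. So the sum becomes $\sum_{S:|S|=s}\fof{\F}{S}\sum_{j\notin S} P(S\cup\{j\})\,P(S \mid S\cup\{j\})$. Finally, the inner sum $\sum_{j\notin S} P(S\cup\{j\})P(S\mid S\cup\{j\})$ is precisely $P(S)$: it sums the probabilities of all paths from the root to $S$, grouped according to the last edge traversed (every path to $S$ passes through a unique parent $S \cup \{j\}$). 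This yields $\fof{\F}{\{1..n\}} = \sum_{S:|S|=s} P(S)\fof{\F}{S}$, completing the induction.

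The main obstacle — or rather the point that needs care rather than difficulty — is the bookkeeping in the re-indexing step: verifying that every path from the root to a level-$s$ node $S$ factors uniquely as (path to a parent $S\cup\{j\}$) $+$ (final edge), so that $P(S) = \sum_{j\notin S} P(S\cup\{j\})P(S\mid S\cup\{j\})$ genuinely holds. This is essentially the defining recursion for $P(S)$ as "probability of all paths from the top to $S$," so it should follow directly from how $P(S)$ was defined, but it is worth stating explicitly since the whole argument hinges on it. Everything else is routine manipulation of finite sums, and the conclusion $\E_{S:|S|=s}[\fof{\F}{S}] = \sum_{S:|S|=s}P(S)\fof{\F}{S}$ is just the definition of expectation over the level-$s$ distribution.
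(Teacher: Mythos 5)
Your proof is correct and is essentially the paper's own argument: both show that the expectations at successive levels of the dag agree by expanding $\fof{\F}{T}$ via the lemma's hypothesis, re-indexing the double sum by the child set, and recognizing $\sum_{j\notin S}P(S_{+j})P(S|S_{+j})=P(S)$ from the path definition of $P(S)$. The only cosmetic difference is that you spell out the backward induction (base case $s=n$) that the paper leaves implicit in ``suffices to show that expectations at successive layers are equal.''
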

\noindent{\bf Proof}$\;$
Suffices to show that expectations at successive layers are equal:
$$
\sum_{S:|S|=s} \!\!P(S) \,\Blue{\fof{\F}{S}}
= \!\!\sum_{S:|S|=s} \!\!P(S) \Blue{\sum_{i\in S} P(S_{-i}|S) \,\fof{\F}{S_{-i}}}
= \!\!\!\!\sum_{T:|T|=s-1} \underbrace{\sum_{j\notin T} P(T_{+j}) P(T|T_{+j})}_{P(T)} \fof{\F}{T}.
\ \BlackBox
$$
\ifisarxiv
\vspace{-1cm}
\else
\vspace{-.5cm}
\fi
\subsection{Volume sampling}
Given a wide full-rank matrix $\X\in\R^{d\times n}$
and a sample size $s\in \{d..n\}$,
volume sampling chooses subset $S\subseteq\{1..n\}$
of size $s$ with probability proportional to volume spanned by
the rows of submatrix $\X_S$, ie proportional to
$\det(\X_S\X_S^\top)$. The following corollary uses
the above dag setup to compute the normalization constant for this
distribution. When $s=d$, the corollary provides a novel minimalist
proof for the Cauchy-Binet formula: $\sum_{S:|S|=s}\det(\X_S\X_S^\top)=\det(\X\X^\top)$.
\begin{corollary}
\label{c:vol}
Let $\X\in \R^{d\times n}$
and $S\in \{1..n\}$ of size $n\ge s \ge d$ st
$\det(\X_S\X_S^\top)>0$. Then for any $i\in S$, define
\begin{align*}
\!\!
P(\Sm|S)\!:=\!\frac{\det(\X_\Sm\X_\Sm^\top)}
             {(s\!-\!d)\det(\X_S\X_S^\top)}
\!=\! \frac{1\!-\!\x_i^\top (\X_S\X_S^\top)^{-1}\x_i} {s\!-\!d},
\tag{\bf reverse iterative volume sampling}
\end{align*}
where $\x_i$ is the $i$th column of $\X$
and $\X_S$ is the sub matrix of columns indexed by $S$.
Then $P(\Sm|S)$ is a proper probability distribution
and thus $\sum_{S:|S|=s} P(S)=1$ for all $s\in\{d..n\}$.
Furthermore
\begin{align*}
P(S)= \frac{\det(\X_S\X_S^\top)} {{n-d \choose s-d}\det(\X\X^\top)}.
\tag{\bf volume sampling}
\end{align*}
\end{corollary}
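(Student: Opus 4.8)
The plan is to unpack the corollary into its three constituent claims and verify them in order: (i) the two displayed expressions for $P(\Sm|S)$ agree; (ii) $P(\cdot\,|S)$ is a genuine probability vector on the elements of $S$, which together with the dag construction and Lemma~\ref{l:key} forces $\sum_{S:|S|=s}P(S)=1$; and (iii) the path probability $P(S)$ has the claimed product form, from which the Cauchy--Binet statement drops out.

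For (i) I would start from the rank-one decomposition $\X_S\X_S^\top=\X_\Sm\X_\Sm^\top+\x_i\x_i^\top$ and apply the matrix determinant lemma to peel off $\x_i\x_i^\top$: since $\det(\X_S\X_S^\top)>0$ the inverse $\XinvS$ exists and $\det(\X_\Sm\X_\Sm^\top)=\det(\X_S\X_S^\top)\bigl(1-\x_i^\top\XinvS\x_i\bigr)$; dividing by $(s-d)\det(\X_S\X_S^\top)$ yields exactly the second form. For (ii), nonnegativity is immediate from the first form (a Gram determinant over a positive denominator), while the normalization $\sum_{i\in S}P(\Sm|S)=1$ follows from the one-line trace identity $\sum_{i\in S}\x_i^\top\XinvS\x_i=\tr\!\bigl(\XinvS\sum_{i\in S}\x_i\x_i^\top\bigr)=\tr(\I_d)=d$, so the $n$... rather the $s$ numerators $1-\x_i^\top\XinvS\x_i$ sum to $s-d$. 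Here I would also record the monotonicity fact I need later: if $\det(\X_S\X_S^\top)>0$ then $\det(\X_T\X_T^\top)>0$ for every $T\supseteq S$, because $\X_T$ contains the full-rank block $\X_S$ — so the conditional distribution above is defined at every node the process can actually reach, and the dag's ``outflow $=1$'' requirement holds. Claim (ii)'s normalization then lets me invoke Lemma~\ref{l:key} with the constant formula $\fof{\F}{S}\equiv 1$: the required recurrence $1=\sum_{i\in S}P(\Sm|S)\cdot 1$ is precisely what was just shown, so $\sum_{S:|S|=s}P(S)=\fof{\F}{\{1..n\}}=1$.

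For (iii), fix $S$ with $|S|=s$ and $\det(\X_S\X_S^\top)>0$. A directed root-to-$S$ path removes the $n-s$ indices of $\{1..n\}\setminus S$ in some order, and by the monotonicity remark every intermediate node again has positive volume, so there are exactly $(n-s)!$ such paths. Along any fixed path the determinant ratios appearing in the first form of $P(\Sm|S)$ telescope to $\det(\X_S\X_S^\top)/\det(\X\X^\top)$, and the remaining scalar factors multiply to $\prod_{m=s+1}^{n}(m-d)^{-1}=(s-d)!/(n-d)!$, neither quantity depending on the order of removal. Summing this common value over all $(n-s)!$ paths gives $P(S)=(n-s)!\,\frac{(s-d)!}{(n-d)!}\,\frac{\det(\X_S\X_S^\top)}{\det(\X\X^\top)}=\frac{\det(\X_S\X_S^\top)}{\c\,\det(\X\X^\top)}$. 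Combining with (ii) immediately reads off the generalized Cauchy--Binet identity $\sum_{S:|S|=s}\det(\X_S\X_S^\top)=\c\,\det(\X\X^\top)$, which at $s=d$ is the classical one.

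I expect the only delicate point to be the bookkeeping in (iii): checking that all $(n-s)!$ orderings of the removed indices genuinely stay in the support of the process (this is exactly the monotonicity observation), and getting the index range of the product $\prod(m-d)$ right — it runs over $m=s{+}1,\dots,n$, so it equals $(n-d)!/(s-d)!$ and pairs with $(n-s)!$ to produce $1/\c$. Everything else is the rank-one determinant update together with the trace computation.
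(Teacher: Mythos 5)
Your proof is correct, and it reaches the volume-sampling formula by a somewhat different route than the paper. Parts (i) and (ii) coincide with what the paper does (the paper leaves the matrix-determinant-lemma step implicit in the statement and proves normalization by exactly your trace identity; the conclusion $\sum_{S:|S|=s}P(S)=1$ is its dag setup, which your invocation of Lemma~\ref{l:key} with $\fof{\F}{S}\equiv 1$ formalizes). For the formula $P(S)=\det(\X_S\X_S^\top)/\bigl({n-d\choose s-d}\det(\X\X^\top)\bigr)$, the paper argues top-down by induction over levels: it assumes the formula at level $s+1$, sums the inflow $\sum_{i\notin S}P(S|\Sp)P(\Sp)$, cancels the determinants of the parents, and absorbs the factor $(n-s)/\bigl((s+1-d){n-d\choose s+1-d}\bigr)$ into ${n-d\choose s-d}^{-1}$. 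You instead compute the probability of a single root-to-$S$ path by telescoping the determinant ratios and collecting the scalar factors $\prod_{m=s+1}^{n}(m-d)^{-1}=(s-d)!/(n-d)!$, observe the result is order-independent, and multiply by the $(n-s)!$ paths. The two arguments carry the same content — indeed the paper records your per-path probability as a remark after its proof — but your version makes the ``all paths to $S$ have equal probability'' structure explicit and produces the binomial coefficient directly, at the price of the path-counting bookkeeping, while the paper's induction needs no counting but hides that structure in the binomial identity. Your handling of degenerate nodes (supersets of a positive-volume $S$ have positive volume, and zero-volume nodes are only reachable through probability-zero edges) is the same point the paper makes in its ``$\frac{0}{0}$ ratios are avoided'' remark, so the edge cases are covered in both treatments.
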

\noindent
{\bf Proof}$\;$
For any $S$, st $\det(\X_S\X_S^\top)>0$,
it is easy to see that $P(\Sm|S)$ forms a probability
vector:
$$\sum_{i\in S} P(\Sm|S)=\sum_{i\in S}
\frac{1-\tr((\X_S\X_S^\top)^{-1}\x_i\x_i^\top)}{s-d}
= \frac{s-\tr((\X_S\X_S^\top)^{-1}\X_S\X_S^\top)}{s-d}
=\frac{s-d}{s-d}=1.$$

It remains to show the formula for the probability $P(S)$ of all paths
ending at $\fof{\F}S$.
We first consider the top node, ie $\{1..n\}$. In this
case both the path definition and the formula are 1.
For all but the top node,
the probability $P(S)$ equals the 
total inflow of probability into that node from the
previous level, ie
\begin{align*}
P(S)=\sum_{i\notin S} P(S|\Sp) \; P(\Sp)
&=\sum_{i\notin S}
\frac{\det(\X_S\X_S^\top)} 
     {(s+1-d)\cancel{\det(\X_\Sp\X_\Sp^\top)}}
\frac{\cancel{\det(\X_\Sp\X_\Sp^\top)}}
     {{n-d\choose s+1-d}\det(\X\X^\top)}
\\&=
\frac{(n-s) \det(\X_S\X_S^\top)} {(s+1-d) {n-d \choose
s+1-d} \det(\X\X^\top)}
= \frac{\det(\X_S\X_S^\top)} {{n-d \choose s-d} \det(\X\X^\top)}. 
\quad\quad\BlackBox
\end{align*}
Note that all paths from $S$ to a subset $T$ (of size $\ge d$)
have the same probability because the ratios of
determinants cancel along paths.
  It is easy to verify
 that this probability is 
 $\frac{\det(\X_T\X_T^\top)} 
       {(|S|-|T|)!\, {|S|-d \choose |T|-d}
 \det(\X_S\X_S^\top)}.$
Note that $\frac{0}{0}$ ratios are avoided because paths
with such ratios always lead to sets of probability
0.

\subsection{Expectation formulas for volume sampling}
All expectations in the remainder of the paper are wrt
volume sampling. We use the short hand $\E[\fof{\F}S]$ for
expectation with volume sampling where the size of the
sampled set is fixed to $s$.
The expectation formulas for two choices of $\fof{\F}S$ are
proven in the next two theorems. By Lemma \ref{l:key} 
it suffices to show $\fof{\F}S=\sum_{i\in S} P(\Sm|S)\fof{\F}\Sm$
for volume sampling.

We introduce a bit more notation first. 
Recall that $\X_S$ is the sub matrix of
columns indexed by $S\subseteq\{1..n\}$ 
(See Figure \ref{f:shapes}).
Consider a version of $\X$ in which
all but the columns of $S$ are zero.
This matrix equals $\X\I_S$ where $\I_S$ is an
$n$-dimensional diagonal
matrix with $(\I_S)_{ii}=1$ if $i\in S$ and 0 otherwise.

\begin{theorem}\label{t:einv}
Let $\X\in\R^{d\times n}$ be a wide full rank matrix
(ie $n\geq d$). For $s\in \{d..n\}$, let
$S\subseteq 1..n$ be a size $s$ volume sampled set over $\X$. Then
$$\E[(\X\I_S)^+]=\X^+.$$
\end{theorem}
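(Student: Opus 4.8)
The plan is to apply Lemma~\ref{l:key} with the matrix formula $\fof{\F}{S}=(\X\I_S)^+$. Since $(\X\I_{\{1..n\}})^+=\X^+$, it suffices to verify the one-step contraction
$(\X\I_S)^+=\sum_{i\in S}P(\Sm|S)\,(\X\I_\Sm)^+$ for every $S$ with $|S|=s>d$, where $P(\Sm|S)$ is the reverse iterative volume sampling probability from Corollary~\ref{c:vol}. The first move is to record a closed form for $(\X\I_S)^+$: because $\X_S$ has full row rank, $(\X\I_S)(\X\I_S)^\top=\X_S\X_S^\top$ is invertible and hence $(\X\I_S)^+=\I_S\X^\top(\X_S\X_S^\top)^{-1}$, the $n\times d$ matrix whose rows indexed by $S$ carry $(\X_S)^+=\X_S^\top(\X_S\X_S^\top)^{-1}$ and whose remaining rows are zero.

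Next, fix $i\in S$ and abbreviate $\Z=(\X_S\X_S^\top)^{-1}$ and $d_i=\x_i^\top\Z\x_i$, so that $P(\Sm|S)=(1-d_i)/(s-d)$. Since $\X_\Sm\X_\Sm^\top=\X_S\X_S^\top-\x_i\x_i^\top$, the Sherman--Morrison formula gives $(\X_\Sm\X_\Sm^\top)^{-1}=\Z+\tfrac{\Z\x_i\x_i^\top\Z}{1-d_i}$, while on the "selector" side $\I_\Sm\X^\top=\I_S\X^\top-\e_i\x_i^\top$. Substituting both into $(\X\I_\Sm)^+=\I_\Sm\X^\top(\X_\Sm\X_\Sm^\top)^{-1}$ and simplifying (the two terms proportional to $\e_i\x_i^\top\Z$ combine through $1+\tfrac{d_i}{1-d_i}=\tfrac{1}{1-d_i}$) yields the clean expression
$(\X\I_\Sm)^+=\I_S\X^\top\Z+\tfrac{(\I_S\X^\top\Z\x_i-\e_i)\,\x_i^\top\Z}{1-d_i}.$

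Finally, multiply by $P(\Sm|S)=(1-d_i)/(s-d)$ and sum over $i\in S$. The first term contributes $\tfrac{1}{s-d}\big(\sum_{i\in S}(1-d_i)\big)\I_S\X^\top\Z=\I_S\X^\top\Z=(\X\I_S)^+$, using $\sum_{i\in S}d_i=\tr(\Z\,\X_S\X_S^\top)=d$. The remaining term $\tfrac{1}{s-d}\sum_{i\in S}(\I_S\X^\top\Z\x_i-\e_i)\x_i^\top\Z$ must vanish: from $\sum_{i\in S}\x_i\x_i^\top=\X_S\X_S^\top$ we get $\sum_{i\in S}\I_S\X^\top\Z\x_i\x_i^\top\Z=\I_S\X^\top\Z(\X_S\X_S^\top)\Z=\I_S\X^\top\Z$, and separately $\sum_{i\in S}\e_i\x_i^\top\Z=\I_S\X^\top\Z$ since $\sum_{i\in S}\e_i\x_i^\top=\I_S\X^\top$ (the $i$-th row of $\X^\top$ is $\x_i^\top$). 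The two cancel, establishing the recursion, and Lemma~\ref{l:key} then delivers $\E[(\X\I_S)^+]=\X^+$. The only mildly delicate point is the bookkeeping in the Sherman--Morrison expansion together with spotting the identity $\sum_{i\in S}\e_i\x_i^\top=\I_S\X^\top$, which is precisely what makes the leftover term collapse; everything else is mechanical.
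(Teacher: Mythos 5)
Your proposal is correct and follows essentially the same route as the paper: apply Lemma~\ref{l:key} with $\fof{\F}{S}=(\X\I_S)^+$, expand $(\X_\Sm\X_\Sm^\top)^{-1}$ via Sherman--Morrison, and show the leftover terms cancel using $\sum_{i\in S}\x_i\x_i^\top=\X_S\X_S^\top$ and $\sum_{i\in S}\e_i\x_i^\top=\I_S\X^\top$. The only difference is cosmetic bookkeeping (you collapse the rank-one correction before summing, while the paper expands into four terms), so no further changes are needed.
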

We believe that this fundamental formula lies at the core of why
volume sampling is important in many applications. In this work, we
focus on its application to linear regression. However,
\cite{avron-boutsidis13} discuss many problems where controlling the
pseudo-inverse of a submatrix is essential. For those
applications, it is important to establish variance bounds for the
estimator offered by Theorem \ref{t:einv}. In this case, volume
sampling once again offers very concrete guarantees. We obtain them by
showing the following formula, which can be viewed as a second moment
for this estimator.
\begin{theorem}\label{t:einvs}
Let $\X\in\R^{d\times n}$ be a full-rank matrix and $s\in\{d..n\}$.
If size $s$ volume sampling over $\X$ has full support, then
\[\E[\underbrace{(\X_S\X_S^\top)^{-1}}_
{(\X\I_S)^{+\top}(\X\I_S)^+} ]
= \frac{n-d+1}{s-d+1}\,
\underbrace{(\X\X^\top)^{-1}}_{\X^{+\top}\X^+}.\]
If volume sampling does not have full support then
the matrix equality ``$=$'' is replaced by the positive-definite
inequality ``$\preceq$''.
\end{theorem}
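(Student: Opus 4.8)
The plan is to apply Lemma~\ref{l:key} with the matrix formula $\fof{\F}{S} = (s-d)\det(\X_S\X_S^\top)\,(\X_S\X_S^\top)^{-1}$, i.e. $\fof{\F}{S} = (s-d)\det(\X_S\X_S^\top)\,(\X\I_S)^{+\top}(\X\I_S)^+$. The reason for including the $\det(\X_S\X_S^\top)$ weight is that the conditional probabilities $P(\Sm|S)$ from Corollary~\ref{c:vol} carry a reciprocal determinant factor, so multiplying through by the determinant makes the recursion cancel cleanly. Concretely, I would verify the hypothesis of Lemma~\ref{l:key}: that for $|S|>d$,
\[
(s-d)\det(\X_S\X_S^\top)\,(\X_S\X_S^\top)^{-1}
= \sum_{i\in S} P(\Sm|S)\,(s-1-d)\det(\X_\Sm\X_\Sm^\top)\,(\X_\Sm\X_\Sm^\top)^{-1}.
\]
Using $P(\Sm|S) = \det(\X_\Sm\X_\Sm^\top)/((s-d)\det(\X_S\X_S^\top))$, the right-hand side collapses to $\frac{s-1-d}{s-d}\sum_{i\in S}\det(\X_\Sm\X_\Sm^\top)(\X_\Sm\X_\Sm^\top)^{-1}$, so after clearing the common scalar the identity to prove becomes purely a statement about adjugates:
\[
\det(\X_S\X_S^\top)(\X_S\X_S^\top)^{-1}
= \operatorname{adj}(\X_S\X_S^\top)
= \frac{1}{s-1-d}\sum_{i\in S}\operatorname{adj}(\X_\Sm\X_\Sm^\top)\cdot(s-d).
\]
(I'll track the exact combinatorial constant carefully when writing it out; the shape is right.)

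The main step, then, is a determinantal/adjugate identity: writing $\X_S\X_S^\top = \sum_{i\in S}\x_i\x_i^\top$, I want to expand $\operatorname{adj}\!\big(\sum_{i\in S}\x_i\x_i^\top\big)$ in terms of the rank-one pieces. The cleanest route is to use the rank-one update (matrix determinant lemma / Sherman--Morrison): for $i\in S$, $\det(\X_\Sm\X_\Sm^\top) = \det(\X_S\X_S^\top)\,(1-\x_i^\top(\X_S\X_S^\top)^{-1}\x_i)$ and $(\X_\Sm\X_\Sm^\top)^{-1} = (\X_S\X_S^\top)^{-1} + \frac{(\X_S\X_S^\top)^{-1}\x_i\x_i^\top(\X_S\X_S^\top)^{-1}}{1-\x_i^\top(\X_S\X_S^\top)^{-1}\x_i}$. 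Multiplying these, $\det(\X_\Sm\X_\Sm^\top)(\X_\Sm\X_\Sm^\top)^{-1} = \det(\X_S\X_S^\top)\big[(1-h_i)(\X_S\X_S^\top)^{-1} + (\X_S\X_S^\top)^{-1}\x_i\x_i^\top(\X_S\X_S^\top)^{-1}\big]$ where $h_i = \x_i^\top(\X_S\X_S^\top)^{-1}\x_i$. Now summing over $i\in S$: $\sum_i(1-h_i) = s - d$ (exactly the computation already done in the proof of Corollary~\ref{c:vol}), and $\sum_i(\X_S\X_S^\top)^{-1}\x_i\x_i^\top(\X_S\X_S^\top)^{-1} = (\X_S\X_S^\top)^{-1}(\X_S\X_S^\top)(\X_S\X_S^\top)^{-1} = (\X_S\X_S^\top)^{-1}$. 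Hence $\sum_{i\in S}\det(\X_\Sm\X_\Sm^\top)(\X_\Sm\X_\Sm^\top)^{-1} = \det(\X_S\X_S^\top)\,(s-d+1)\,(\X_S\X_S^\top)^{-1}$. Matching constants, the recursion $\fof{\F}{S} = \sum_{i\in S}P(\Sm|S)\fof{\F}{\Sm}$ holds provided the scalar weights line up, which fixes $\fof{\F}{S} = \frac{\det(\X_S\X_S^\top)}{s-d+1}\cdot\text{(something)}$ — I'll pin the exact normalization so that $\fof{\F}{\{1..n\}}$ and $\fof{\F}{S}$ combine with $P(S) = \det(\X_S\X_S^\top)/\big(\binom{n-d}{s-d}\det(\X\X^\top)\big)$ to give the claimed ratio $\frac{n-d+1}{s-d+1}$. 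Then Lemma~\ref{l:key} immediately yields $\E[\det(\X_S\X_S^\top)\,g(S)] = \binom{n-d}{s-d}\det(\X\X^\top)\cdot\frac{1}{n-d+1}(\X\X^\top)^{-1}$ for the appropriately scaled $g$, and dividing by $P(S)\binom{n-d}{s-d}\det(\X\X^\top)$ inside the expectation reproduces $\E[(\X_S\X_S^\top)^{-1}] = \frac{n-d+1}{s-d+1}(\X\X^\top)^{-1}$.

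For the full-support caveat: when volume sampling does not have full support, some nodes $S$ with $\det(\X_S\X_S^\top)=0$ get probability zero and are effectively deleted from the dag, but the telescoping identity of Lemma~\ref{l:key} still gives that the expectation of $\det(\X_S\X_S^\top)(\X_S\X_S^\top)^{-1}$ over the supported sets is \emph{at most} (in Loewner order) what it would be with full support — because the per-node recursion $\fof{\F}{S} = \sum_{i\in S}P(\Sm|S)\fof{\F}{\Sm}$ becomes $\fof{\F}{S} \succeq \sum_{i:\,\det(\X_\Sm\X_\Sm^\top)>0}P(\Sm|S)\fof{\F}{\Sm}$ once the PSD terms for degenerate $\Sm$ are dropped, and PSD-ness propagates up the dag. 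I would state a one-line monotone version of Lemma~\ref{l:key}: if $\fof{\F}{S}\succeq\sum_{i\in S}P(\Sm|S)\fof{\F}{\Sm}$ for all $|S|>d$, then $\E_{S:|S|=s}[\fof{\F}{S}]\preceq\fof{\F}{\{1..n\}}$, proved by the same successive-layer argument.

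The main obstacle I anticipate is bookkeeping the combinatorial constants: getting the $(s-d)$ versus $(s-d+1)$ factors and the $\binom{n-d}{s-d}$ normalization to cascade correctly through the recursion down to $s$. The linear-algebraic core (the adjugate/rank-one-update identity above) is short and essentially already appears inside the proof of Corollary~\ref{c:vol}; the risk is purely an off-by-one in the scalars, which is why I would carry the normalization of $\fof{\F}{S}$ as an undetermined function of $s$ and solve for it from the requirement that the recursion closes.
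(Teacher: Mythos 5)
The linear-algebraic heart of your argument---the Sherman--Morrison computation giving
$\sum_{i\in S}\det(\X_\Sm\X_\Sm^\top)(\X_\Sm\X_\Sm^\top)^{-1}=(s-d+1)\det(\X_S\X_S^\top)(\X_S\X_S^\top)^{-1}$---is correct and is exactly the paper's key step. But the scaffolding around it fails as stated, and the failure is structural, not an off-by-one. With your choice $\fof{\F}{S}=(|S|-d)\det(\X_S\X_S^\top)(\X_S\X_S^\top)^{-1}$, the hypothesis of Lemma~\ref{l:key} reads
\[
\sum_{i\in S}P(\Sm|S)\,\fof{\F}{\Sm}
=\frac{s-1-d}{(s-d)\det(\X_S\X_S^\top)}\sum_{i\in S}\det(\X_\Sm\X_\Sm^\top)^{2}\,(\X_\Sm\X_\Sm^\top)^{-1},
\]
because $P(\Sm|S)$ itself carries the factor $\det(\X_\Sm\X_\Sm^\top)/\det(\X_S\X_S^\top)$: the determinants do not cancel, they \emph{square}. (This is also why your ``identity to prove'' with the $\frac{s-d}{s-1-d}$ factor contradicts the identity you actually prove, which has the factor $s-d+1$.) Expanding via Sherman--Morrison, $\sum_i\det(\X_\Sm\X_\Sm^\top)^2(\X_\Sm\X_\Sm^\top)^{-1}=\det(\X_S\X_S^\top)^2\bigl[\bigl(\sum_i(1-h_i)^2\bigr)\XinvS+\XinvS\bigl(\sum_i(1-h_i)\x_i\x_i^\top\bigr)\XinvS\bigr]$ involves data-dependent sums, so no normalization depending only on $|S|$ can make the recursion close; your plan of ``carrying the constant as an unknown function of $s$'' cannot rescue this choice. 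The fix is to drop the determinant weight entirely: take $\fof{\F}{S}=\frac{|S|-d+1}{n-d+1}(\X_S\X_S^\top)^{-1}$. Then $P(\Sm|S)\fof{\F}{\Sm}=\frac{1-\x_i^\top\XinvS\x_i}{n-d+1}(\X_\Sm\X_\Sm^\top)^{-1}$, and your Sherman--Morrison sum (your adjugate identity divided by $(s-d)\det(\X_S\X_S^\top)$) is precisely the verification required; Lemma~\ref{l:key} then gives $\E\bigl[\frac{s-d+1}{n-d+1}(\X_S\X_S^\top)^{-1}\bigr]=(\X\X^\top)^{-1}$, which is the theorem, and is exactly the paper's proof. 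Alternatively, your adjugate identity does yield the result by an \emph{unweighted} telescoping over levels, $\sum_{S:|S|=s}\operatorname{adj}(\X_S\X_S^\top)=\binom{n-d+1}{n-s}\operatorname{adj}(\X\X^\top)$, after which dividing by $\binom{n-d}{s-d}\det(\X\X^\top)$ gives the claim---but that recursion uses uniform removal probabilities, not the volume-sampling conditionals, so it is not the application of Lemma~\ref{l:key} you describe, and your final step of ``dividing by $P(S)\binom{n-d}{s-d}\det(\X\X^\top)$ inside the expectation'' would have to be restated in that form to be meaningful.

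Your treatment of the non-full-support case is fine and matches the paper's: when $1-\x_i^\top\XinvS\x_i=0$ the corresponding term is dropped, the dropped terms are PSD, so the recursion hypothesis weakens to $\fof{\F}{S}\succeq\sum_i P(\Sm|S)\fof{\F}{\Sm}$, and the monotone (Loewner) version of Lemma~\ref{l:key} propagates this down the dag to give $\E[(\X_S\X_S^\top)^{-1}]\preceq\frac{n-d+1}{s-d+1}(\X\X^\top)^{-1}$.
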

The condition that size $s$ volume sampling over $\X$ has full support is
equivalent to $\det(\X_S\X_S^\top)>0$ for
all $S\subseteq 1..n$ of size $s$.
Note that if size $s$ volume sampling has full support, then size
$t>s$ also has full support. So full support for the
smallest size $d$
(often phrased as $\X$ being {\em in general position})
implies that volume sampling wrt any size $s\ge d$ has full support.

Surprisingly by combining theorems \ref{t:einv}
and \ref{t:einvs}, we can 
obtain a ``covariance type formula'' for the pseudo-inverse 
matrix estimator:
\begin{align}
&\E[((\X\I_S)^+-\E[(\X\I_S)^+])^\top\; ((\X\I_S)^+-\E[(\X\I_S)^+])]  
\nonumber\\ 
&=\E[(\X\I_S)^{+\top}(\X\I_S)^{+}]  - \E[(\X\I_S)^{+}]^\top\; \E[(\X\I_S)^+]
\nonumber\\ 
&=\frac{n-d+1}{s-d+1} \;\X^{+\top}\X^{+} - \X^{+\top}\X^{+}
  =\frac{n-s}{s-d+1}\; \X^{+\top}\X^+.
\label{e:covs}
\end{align}
Theorem \ref{t:einvs} can also be used to obtain an expectation formula for
the Frobenius norm
$\|(\X\I_S)^+\|_F$ of the estimator:
\begin{align}
\label{e:frobs}
\E\|(\X\I_S)^+\|_F^2 &= \E[\tr((\X\I_S)^{+\top}
  (\X\I_S)^+)] = \frac{n-d+1}{s-d+1}\|\X^+\|_F^2.
\end{align}
This norm formula has been shown in \cite{avron-boutsidis13}, with
numerous applications. Theorem \ref{t:einvs} can be viewed as a
much stronger pre trace version of the norm formula.
Also our proof techniques are quite different and much simpler.
Note that if size $s$ volume sampling for $\X$ does not have
full support then \eqref{e:covs} becomes
a semi-definite inequality $\preceq$ between matrices and
\eqref{e:frobs} an inequality between numbers.

{\bf Proof of Theorem \ref{t:einv}}$\,$
We apply Lemma \ref{l:key} with $\fof{\F}S= (\X\I_S)^+$.
It suffices to show $\fof{\F}S=\sum_{i\in S} P(\Sm|S)\fof{\F}\Sm$ 
for $P(\Sm|S):=\frac{1-\x_i^\top(\X_S\X_S^\top)^{-1}\x_i}{s-d}$, ie:
$$(\X\I_S)^+
= \sum_{i\in S} \frac{1-\x_i^\top\XinvS\x_i}{s-d}
\underbrace{(\X\I_\Sm)^+}
_{(\X\I_\Sm)^\top (\X_\Sm \X_\Sm^\top)^{-1} }.$$
Proven by applying Sherman Morrison to
$(\X_\Sm \X_\Sm^\top)^{-1}=
(\X_S\X_S^\top-\x_i\x_i^\top)^{-1}$ on the rhs:
$$\sum_i \frac{1-\x_i^\top\XinvS\x_i}  
               {n-d} \quad
((\X\I_S)^\top-\e_i\x_i^\top)
\left(\XinvS +\frac{\XinvS \x_i\x_i^\top\XinvS} {1-\x_i^\top\XinvS\x_i}
	      \right)             
.$$
We now expand the last two factors into 4 terms.
The expectation of the first
$(\X\I_S)^\top(\X_S\X_S^\top)^{-1}$ is $(\X\I_S)^+$
(which is the lhs) and the expectations of the remaining three terms times $n-d$ 
sum to 0:
\begin{align*}
&-\sum_{i\in S} (1-\x_i^\top\XinvS\x_i)\, \e_i\x_i^\top\XinvS
+(\X\I_S)^\top\cancel{\XinvS} \cancel{\sum_{i\in S} \x_i\x_i^\top} \XinvS 
\\&\qquad 
-\sum_{i\in S}\e_i(\x_i^\top\XinvS \x_i)\;\x_i^\top\XinvS
= 0.
\hspace{5cm} \BlackBox
\end{align*}
{\bf Proof of Theorem \ref{t:einvs}}$\,$
Choose $\fof{\F}S= \frac{s-d+1}{n-d+1} (\X_S\X_S^\top)^{-1}$.
By Lemma \ref{l:key} it suffices to 
show $\fof{\F}S=\sum_{i\in S} P(\Sm|S)\fof{\F}\Sm$ for volume sampling:
$$\frac{s-d+1}{\cancel{n-d+1}} (\X_S\X_S^\top)^{-1}
= \sum_{i\in S} \frac{1-\x_i^\top(\X_S\X_S^\top)^{-1}\x_i}{\cancel{s-d}}
  \frac{\cancel{s-d}}{\cancel{n-d+1}} (\X_\Sm\X_\Sm^\top)^{-1}
$$
To show this we apply Sherman Morrison to 
$(\X_\Sm \X_\Sm^\top)^{-1}$ on the rhs: 
\begin{align*}
&\sum_{i\in S} (1-\x_i^\top\XinvS\x_i)
\left(\XinvS +\frac{\XinvS \x_i\x_i^\top\XinvS}
{1-\x_i^\top\XinvS\x_i}\right)
\\&\Blue{=} \,(s-d) \XinvS
       +  \cancel{\XinvS} \cancel{\sum_{i\in S}
\x_i\x_i^\top} \XinvS
=(s-d+1)\;\XinvS. 
\end{align*}
If some denominators $1-\x_i^\top\XinvS\x_i$ are zero, then
only sum over $i$ for which the denominators are
positive. In this case the above matrix equality becomes a positive-definite inequality $\Blue{\preceq}$.
\hfill\BlackBox


\section{Linear regression with few labels}
\label{sec:linear-regression}
\begin{wrapfigure}{r}{0.4\textwidth}
\vspace{-18mm}
\begin{tikzpicture}[font=\normalsize,scale=0.8,pin distance=1.6mm]
    \begin{axis}[hide axis, xmin=-2.35,xmax=1.35,ymin=-2.2,ymax = 5.1]
        \addplot [domain=-2.07:1.1,samples=250, ultra thick, blue] 
	{x^2} node [pos=0.3, xshift=-.5cm] {$L(\cdot)$};
        \addplot [domain=-2.1:1.1,samples=250, ultra thick, red ] {2-x};
	\addplot[mark=none, ultra thick, green] coordinates {(-2.15,-1) (1.15,-1)};
        \addplot[mark=*] coordinates {(0,0)} node[pin=-20:{$L(\w^*)$}]{};
        \addplot[mark=*] coordinates {(0,2)} node[pin=90:{$\E[L(\of{\w^*}S)]$}]{};
        \addplot[mark=*] coordinates {(-2,4)} node[pin=90:{$\,\,\,L(\of{\w^*}{S_i})$}]{};
        \addplot[mark=*] coordinates {(1,1)} node[pin=90:{$L(\of{\w^*}{S_j})\,\,\,$}]{};
        \addplot[mark=*] coordinates {(-2,-1)} node[pin=-90:{$\of{\w^*}{S_i}$}]{};
        \addplot[mark=*] coordinates {(1,-1)} node[pin=-90:{$\of{\w^*}{S_j}$}]{};
        \addplot[mark=*] coordinates {(0,-1)} node[pin=-90:{$\w^*=\E(\of{\w^*}S)$}]{};
	\draw [decorate,decoration={brace,amplitude=4.5pt},xshift=-2.5pt,yshift=0pt]
	(0,0) -- (0,2) node [black,midway,xshift=-.8cm] {$d\,L(\w^*\!)$};
    \end{axis}
    \end{tikzpicture}
\caption{Unbiased estimator $\of{\w^*}S$ in expectation suffers loss
  $(d+1)\,L(\w^*)$.}
\vspace{0mm}
\end{wrapfigure}
Our main motivation for studying volume sampling came from asking
the following simple question. Suppose we want
to solve a $d$-dimensional linear regression problem with
a matrix $\X\in\R^{d\times n}$ of input column vectors and a label
vector $\y\in\R^n$, ie find
$\w\in\R^d$ that minimizes the least squares loss $L(\w)=\|\X^\top\w-\y\|^2$:
\[\w^*\defeq \argmin_{\w\in\R^d}L(\w)
       =\X^{+\top}\y,\]
but the access to label vector $\y$ is restricted. We
are allowed to pick a subset 
$S\subseteq\{1..n\}$ for which the labels $y_i$ (where $i\in S$) are
revealed to us, and then solve the subproblem $(\X_S,\y_S)$, obtaining
$\of{\w^*}S$. What is the smallest number of labels such that for any
$\X$, we can find $\of{\w^*}S$ for which $L(\of{\w^*}S)$ is only a multiplicative
factor away from $L(\w^*)$ (independent of the number of input vectors
$n$)? This question was posed as an open problem by
\cite{coresets-regression}. It is easy to show that we need at least
$d$ labels (when $\X$ is full-rank), so as to guarantee the 
uniqueness of solution $\of{\w^*}S$. 
We use volume sampling to show that $d$ labels are in fact sufficient
(proof in Section \ref{sec:proof-loss}).

\vspace{-1.5mm}
\begin{theorem}\label{t:loss}
If the input matrix $\X\in\R^{d\times n}$ is in general position, 
then for any label vector $\y\in \R^n$, the expected
square loss (on all $n$ labeled vectors) of the optimal solution
$\of{\w^*}S$ for the subproblem 
$(\X_S,\y_S)$, with the $d$-element set $S$ obtained from 
volume sampling, is given by
\ifisarxiv\vspace{-1mm}\fi
\begin{align*}
\E[L(\of{\w^*}S)] =(d+1)\; L(\w^*).
\end{align*}
\ifisarxiv\vspace{-1mm}\fi
If $\X$ is not in general position, then the expected loss is
upper-bounded by $(d+1)\; L(\w^*)$.
\end{theorem}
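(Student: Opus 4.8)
The starting observation is that a size-$d$ volume sample $S$ (in general position) yields a subproblem solution that \emph{interpolates} its own labels: since $\of{\w^*}S = \X_S^{-\top}\y_S$, we have $\X_S^\top\of{\w^*}S = \y_S$, so $\x_j^\top\of{\w^*}S - y_j = 0$ for every $j\in S$. Hence the entire loss comes from the complement,
\[
L(\of{\w^*}S) \;=\; \sum_{j\notin S}\big(\x_j^\top\of{\w^*}S - y_j\big)^2 ,
\]
and the first real step is to put each out-of-sample residual in a form that plays well with volume-sampling weights.

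For that I would use a Cramer's-rule/Schur-complement identity obtained by augmenting $\X$ with the label row. Writing $\M \defeq \begin{pmatrix}\X\\ \y^\top\end{pmatrix}\in\R^{(d+1)\times n}$, elimination of the last row of the $(d+1)\times(d+1)$ matrix $\begin{pmatrix}\X_S & \x_j\\ \y_S^\top & y_j\end{pmatrix}$ against its top $d$ rows gives
\[
\x_j^\top\of{\w^*}S - y_j \;=\; -\,\frac{\det\begin{pmatrix}\X_S & \x_j\\ \y_S^\top & y_j\end{pmatrix}}{\det(\X_S)} ,
\]
and since reordering columns only flips a sign, squaring turns the numerator into $\det(\M_{S\cup\{j\}})^2$. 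Therefore
\[
L(\of{\w^*}S) \;=\; \frac{1}{\det(\X_S)^2}\sum_{j\notin S}\det\big(\M_{S\cup\{j\}}\big)^2 .
\]

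Now take the expectation over size-$d$ volume sampling. By Corollary~\ref{c:vol} (with $\X_S$ square, so $\det(\X_S\X_S^\top)=\det(\X_S)^2$) we have $P(S)=\det(\X_S)^2/\det(\X\X^\top)$, and the crucial point is that this $\det(\X_S)^2$ cancels the denominator above:
\[
\E[L(\of{\w^*}S)] \;=\; \frac{1}{\det(\X\X^\top)}\sum_{|S|=d}\ \sum_{j\notin S}\det\big(\M_{S\cup\{j\}}\big)^2 \;=\; \frac{d+1}{\det(\X\X^\top)}\sum_{|T|=d+1}\det(\M_T)^2 ,
\]
where the last equality reindexes the double sum by $T=S\cup\{j\}$, each $(d+1)$-set arising in exactly $d+1$ ways. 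Applying Cauchy--Binet (the $s=d{+}1$ instance of Corollary~\ref{c:vol}, now for $\M$) gives $\sum_{|T|=d+1}\det(\M_T)^2=\det(\M\M^\top)$, and a Schur-complement expansion of $\M\M^\top=\begin{pmatrix}\X\X^\top & \X\y\\ \y^\top\X^\top & \|\y\|^2\end{pmatrix}$ yields $\det(\M\M^\top)=\det(\X\X^\top)\big(\|\y\|^2-\y^\top\X^\top(\X\X^\top)^{-1}\X\y\big)=\det(\X\X^\top)\,L(\w^*)$, because $\X^\top(\X\X^\top)^{-1}\X$ is the orthogonal projection onto $\mathrm{row}(\X)$ and $\|\y\|^2$ minus the squared projection length equals $\|\y-\X^\top\w^*\|^2=L(\w^*)$. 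Substituting gives $\E[L(\of{\w^*}S)]=(d+1)\,L(\w^*)$.

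For $\X$ not in general position, the same computation runs with all sums restricted to the sets $S$ with $\det(\X_S)\neq 0$ (the only ones of positive probability); Cauchy--Binet still gives $\sum_{\det(\X_S)\neq 0}\det(\X_S)^2=\det(\X\X^\top)$, so the normalization is unchanged, but the reindexing step now recovers only a subset of the $d+1$ decompositions of each $T$, turning the final equality into $\E[L(\of{\w^*}S)]\le(d+1)\,L(\w^*)$. I expect the main obstacle to be the second step: recognizing that one should augment $\X$ with the label row $\y^\top$ and read the out-of-sample residual off as a ratio of a $(d+1)\times(d+1)$ determinant to a $d\times d$ determinant. Once that identity is in hand, the cancellation against the volume-sampling weights, the reindexing, and the Cauchy--Binet/Schur-complement evaluation are essentially forced; tracking signs in the Cramer step is a minor nuisance that disappears upon squaring.
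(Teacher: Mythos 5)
Your proposal is correct and follows essentially the same route as the paper's proof: write the loss of the interpolating solution as $L(\of{\w^*}S)=\sum_{j\notin S}\ell_j(\of{\w^*}S)$ with $\ell_j(\of{\w^*}S)\det(\X_S)^2=\det\big(\M_{S\cup\{j\}}\big)^2$ for the label-augmented matrix $\M$, cancel $\det(\X_S)^2$ against the volume-sampling probability, reindex the double sum over $T=S\cup\{j\}$ to pick up the factor $d+1$, and finish with Cauchy--Binet plus the Schur-complement (``base $\times$ height'') identity $\det(\M\M^\top)=\det(\X\X^\top)\,L(\w^*)$, handling the degenerate case by dropping nonnegative terms exactly as the paper does. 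The only difference is cosmetic: you obtain the per-term determinant identity directly from Cramer's rule on the bordered $(d+1)\times(d+1)$ matrix, whereas the paper derives the same identity from a leave-one-out loss lemma combined with the base-times-height formula; your derivation is equally valid and somewhat more self-contained.
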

\vspace{-0.5mm}
The factor $d+1$ cannot be improved when selecting only $d$
labels (we omit the proof): 
\begin{proposition}
\label{prop:optimal}
For any $d$, there exists a least squares problem $(\X,\y)$ with $d+1$
vectors in $\R^d$ such that for every $d$-element index set
$S\subseteq\{1,...,d+1\}$, we have \[L(\of{\w^*}S) = (d+1)\;L(\w^*).\]
\end{proposition}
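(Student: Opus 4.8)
The plan is to exhibit a single explicit example — the simplex construction — and compute both sides directly. Take the $d+1$ input vectors in $\R^d$ to be the columns of a matrix $\X$ whose associated point configuration is a regular simplex, i.e. choose $\X$ so that $\X\X^\top = \I$ and all $d+1$ columns $\x_1,\dots,\x_{d+1}$ have equal norm and equal pairwise inner products (for instance, the $d+1$ vertices of a centered regular simplex, rescaled). By symmetry, every $d$-element submatrix $\X_S$ has the same value of $\det(\X_S\X_S^\top)$, so size-$d$ volume sampling is uniform over the $d+1$ sets $S=\{1,\dots,d+1\}\setminus\{j\}$. Then pick the label vector $\y$ so that the residual $\X^\top\w^* - \y$ is a nonzero multiple of the all-ones vector $\one$ — concretely, solve for $\y$ from a chosen $\w^*$ and a chosen residual direction $\one$; this is consistent precisely because $\one$ is orthogonal to the row space of $\X^\top$ when the columns sum to zero (the simplex is centered), so that $\w^* = \X^{+\top}\y$ genuinely has residual $\propto\one$.

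**Key steps, in order.** First I would fix the construction and record the three facts it guarantees: (i) volume sampling on sets of size $d$ is uniform; (ii) $L(\w^*) = \|\X^\top\w^*-\y\|^2 = c^2(d+1)$ if the residual is $c\,\one$; (iii) for each $S = \{1..d+1\}_{-j}$, the subproblem solution $\of{\w^*}{S} = (\X_S)^+\y_S$ is the \emph{exact interpolant} of the $d$ revealed labels, since $\X_S$ is square and invertible. Second, I would compute $L(\of{\w^*}{S})$: the residual of $\of{\w^*}{S}$ vanishes on the $d$ coordinates in $S$ and equals some value $r_j$ on the single missing coordinate $j$, so $L(\of{\w^*}{S}) = r_j^2$. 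Third, I would show $r_j^2 = (d+1)\,c^2 \cdot (d+1)/ (d+1)$ — more carefully, relate $r_j$ to $c$ using the leverage-score identity: for a square invertible $\X_S$, removing coordinate $j$ and re-solving moves the residual at $j$ from $c$ to $c/(1 - h_j)$ where $h_j = \x_j^\top(\X\X^\top)^{-1}\x_j = \x_j^\top\x_j$ is the $j$th leverage score, which by symmetry equals $d/(d+1)$ for every $j$. Hence $r_j = c/(1 - d/(d+1)) = c(d+1)$ and $L(\of{\w^*}{S}) = c^2(d+1)^2 = (d+1)\,L(\w^*)$, uniformly in $j$, which gives the claimed identity for every single set $S$ (not just in expectation).

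**Main obstacle.** The routine linear algebra (Sherman–Morrison / leverage-score bookkeeping) is not the hard part; the delicate point is choosing the example so that all $d+1$ values $L(\of{\w^*}{S})$ are simultaneously equal to $(d+1)L(\w^*)$ — the proposition demands \emph{every} $d$-subset achieves the worst case, not just the average. The regular-simplex symmetry is exactly what forces this: equal leverage scores $h_j = d/(d+1)$ and a residual aligned with the symmetry axis $\one$ make the computation $j$-independent. I would therefore spend most of the write-up verifying that such an $\X$ and $\y$ exist (centered regular simplex exists in every dimension $d$; $\one$ lies in the orthogonal complement of the column span precisely because the simplex is centered, so the residual direction is attainable), and then the numeric identity $L(\of{\w^*}{S}) = (d+1)L(\w^*)$ drops out of step three. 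Since the paper says "we omit the proof," I would keep this to the construction plus the one-line leverage-score computation.
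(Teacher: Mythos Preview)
Your proposal is correct. The paper explicitly omits the proof of this proposition, so there is nothing to compare against; your regular-simplex construction with residual aligned to $\one$ is the natural example, and the leverage-score (PRESS residual) identity $r_j = c/(1-h_j)$ together with $h_j = d/(d+1)$ for every $j$ cleanly gives $L(\of{\w^*}{S}) = (d+1)\,L(\w^*)$ for each of the $d+1$ subsets.

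Two small remarks. First, the observation that volume sampling is uniform here is true but not actually needed for your argument---the direct leverage-score computation already handles every $S$ individually, so you can drop fact (i). Second, for concreteness you may simply take $\y = \one$ (so that $\w^* = (\X\X^\top)^{-1}\X\one = \zero$ and the residual is $-\one$); this avoids having to ``solve for $\y$ from a chosen $\w^*$ and residual'' and makes the write-up shorter. The one genuine verification you should include is that any $d$ columns of the centered regular simplex are linearly independent, which follows because $\det(\X_S\X_S^\top) = (1-h_j)\det(\X\X^\top) = 1/(d+1) > 0$.
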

\vspace{-.2cm}
Note that the multiplicative factor in Theorem \ref{t:loss} does not depend on
$n$. It is easy to see that this cannot be achieved by any
deterministic algorithm (without the access to labels). Namely,
suppose that $d=1$ and $\X$ is a vector of all ones, whereas the label
vector $\y$ is a vector of all ones except for a single zero. No
matter which column index we choose deterministically, if that index
corresponds to the label $0$, the solution to the subproblem will
incur loss $L(\of{\w^*}S)=n\, L(\w^*)$.
The fact that volume sampling is a joint distribution also plays an
essential role in proving Theorem \ref{t:loss}. Consider a matrix $\X$
with exactly $d$ unique linearly independent columns (and an arbitrary number of
duplicates). Any iid column sampling distribution (like for example
leverage score sampling) will require $\Omega(d\log d)$ samples to
retrieve all $d$ unique columns (ie coupon collector problem), which is
necessary to get any multiplicative loss bound. 

The exact expectation formula for the least squares loss under volume
sampling suggests a deep connection between linear regression and this
distribution. We can use Theorem \ref{t:einv} to further
strengthen that connection. Note, that the least squares estimator
obtained through volume sampling can be written as
$\of{\w^*}S=(\X\I_S)^{+\top}\y$.
Applying formula for the expectation of
pseudo-inverse, we conclude that $\of{\w^*}S$ is an unbiased estimator of
$\w^*$. 

\begin{proposition}\label{prop:unbiased}
Let $\X\in\R^{d\times n}$ be a full-rank matrix and $n \geq s\geq d$. Let
$S\subseteq 1..n$ be a size $s$ volume sampled set over $\X$. Then,
for arbitrary label vector $\y\in\R^n$, we have
\begin{align*}
\E[\of{\w^*}S] =\E[(\X\I_S)^{+\top}\y] = \X^{+\top}\y = \w^*. 
\end{align*}
\end{proposition}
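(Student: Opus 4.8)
The plan is to obtain this as an immediate consequence of Theorem~\ref{t:einv} once we rewrite the subsampled least squares solution in terms of the padded pseudoinverse $(\X\I_S)^+$. First I would record the algebraic identity $\of{\w^*}S = (\X\I_S)^{+\top}\y$. Since $\X$ is full rank and $s\ge d$, the submatrix $\X_S$ has full row rank, so $(\X_S)^+ = \X_S^\top\XinvS$ and likewise $(\X\I_S)^+ = (\X\I_S)^\top\XinvS$; this makes the structure of $(\X\I_S)^+$ explicit --- it is the $n\times d$ matrix whose $s$ rows indexed by $S$ form $(\X_S)^+$ and whose remaining $n-s$ rows are zero. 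Transposing, the columns of $(\X\I_S)^{+\top}$ outside $S$ vanish, so $(\X\I_S)^{+\top}\y$ depends only on the coordinates $\y_S$ and equals $(\X_S)^{+\top}\y_S = \of{\w^*}S$.

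With this identity in hand the conclusion follows by linearity of expectation, treating $\y$ as a fixed non-random vector:
\[
\E[\of{\w^*}S] = \E[(\X\I_S)^{+\top}\y] = \big(\E[(\X\I_S)^{+}]\big)^{\!\top}\y = (\X^+)^\top\y = \X^{+\top}\y = \w^*,
\]
where the middle equality is exactly Theorem~\ref{t:einv} and the last step is the definition of $\w^*$.

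I do not expect any real obstacle: the proposition is essentially a corollary of Theorem~\ref{t:einv}, and the bulk of the work was already done there. The only point needing a moment of care is the first step --- making sure that $\of{\w^*}S$, defined as the solution of the sub-problem $(\X_S,\y_S)$, genuinely coincides with $(\X\I_S)^{+\top}\y$ and not some differently-indexed object --- which is precisely why the hypotheses ask that $\X$ be full rank and $n\ge s\ge d$, guaranteeing the clean formula $(\X\I_S)^+ = (\X\I_S)^\top\XinvS$.
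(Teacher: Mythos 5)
Your proposal is correct and follows essentially the same route as the paper, which likewise just notes the identity $\of{\w^*}S=(\X\I_S)^{+\top}\y$ and applies Theorem~\ref{t:einv} together with linearity of expectation. One tiny caveat: full rank of $\X$ does not force every submatrix $\X_S$ to have full row rank, but volume sampling assigns probability proportional to $\det(\X_S\X_S^\top)$, so rank-deficient sets occur with probability zero and your identity $(\X\I_S)^+=(\X\I_S)^\top(\X_S\X_S^\top)^{-1}$ holds almost surely, which is all the expectation argument needs.
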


For size $s=d$ volume sampling, the fact that $\E[\of{\w^*}S]$ equals $\w^*$
can be found in an old paper \cite{bental-teboulle}.
They give a direct proof based on Cramer's rule. 
For us the above proposition is a direct consequence of
the matrix expectation formula given in Theorem
\ref{t:einv} that holds for volume sampling of any size $s\ge d$. 
In contrast, the loss expectation formula of Theorem \ref{t:loss} is
limited to sampling of size $s=d$. Bounding the loss expectation for $s>d$ remains
an open problem. However, we consider a different strategy for extending volume
sampling in linear regression. Combining Proposition
\ref{prop:unbiased} with Theorem \ref{t:loss} we can compute the 
variance of predictions generated by volume sampling, and obtain
tighter multiplicative loss bounds by sampling multiple $d$-element
subsets $S_1,...,S_t$ independently. 
\begin{theorem}\label{t:repeated-sampling}
Let $(\X,\y)$ be as in Theorem \ref{t:loss}.  For $k$ independent
size $d$ volume samples $S_1,...,S_k$,
\[\E \left[L\left(
           \frac{1}{k}\sum_{j=1}^k\of{\w^*}{S_j}
      \right)\right]
= \left(1+\frac{d}{k}\right)\,L(\w^*).\]
\end{theorem}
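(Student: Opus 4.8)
The plan is to do a bias--variance decomposition of the loss around $\w^*$ and then exploit that averaging $k$ independent unbiased estimators cuts the variance by a factor $k$.

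First I would record the exact form of the loss. Because $\w^*=\X^{+\top}\y=(\X\X^\top)^{-1}\X\y$, the residual satisfies $\X(\X^\top\w^*-\y)=\zero$, so for every $\w\in\R^d$ the cross term in the expansion of $\|\X^\top\w-\y\|^2$ around $\X^\top\w^*$ vanishes and
\[
L(\w)=(\w-\w^*)^\top\X\X^\top(\w-\w^*)+L(\w^*).
\]
Applying this with $\w=\wbh:=\frac1k\sum_{j=1}^k\of{\w^*}{S_j}$ and taking expectations reduces the claim to evaluating $\tr\!\big(\X\X^\top\,\E[(\wbh-\w^*)(\wbh-\w^*)^\top]\big)$.

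Next I would compute the covariance of $\wbh$. Proposition~\ref{prop:unbiased} gives $\E[\of{\w^*}{S_j}]=\w^*$, hence $\E[\wbh]=\w^*$; since $S_1,\dots,S_k$ are sampled independently, all cross terms of $(\wbh-\w^*)(\wbh-\w^*)^\top$ have zero expectation, so
\[
\E\big[(\wbh-\w^*)(\wbh-\w^*)^\top\big]=\tfrac1k\,\E\big[(\of{\w^*}S-\w^*)(\of{\w^*}S-\w^*)^\top\big]
\]
for a single size-$d$ volume sample $S$. Plugging this back and using the Step-1 identity once more, this time for the single-sample estimator, the right-hand side becomes $\frac1k\big(\E[L(\of{\w^*}S)]-L(\w^*)\big)$, which by Theorem~\ref{t:loss} equals $\frac1k\big((d+1)L(\w^*)-L(\w^*)\big)=\frac dk\,L(\w^*)$. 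This yields $\E[L(\wbh)]=(1+\frac dk)L(\w^*)$, and in the non--general-position case the ``$\le$'' version of Theorem~\ref{t:loss} gives $\E[L(\wbh)]\le(1+\frac dk)L(\w^*)$.

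There is no genuinely hard step: this is the textbook variance-reduction-by-averaging identity. The only points that need care are that the loss, up to the additive constant $L(\w^*)$, is \emph{exactly} the quadratic form $(\w-\w^*)^\top\X\X^\top(\w-\w^*)$ (which is where orthogonality of the residual to the row space of $\X$ enters), and that Theorem~\ref{t:loss} combined with Proposition~\ref{prop:unbiased} is precisely what is needed to pin down the expectation of that quadratic form for one volume sample.
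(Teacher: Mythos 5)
Your proposal is correct and follows essentially the same route as the paper: a bias--variance decomposition of the loss, with unbiasedness (Proposition~\ref{prop:unbiased}) and independence cutting the variance by $1/k$, and Theorem~\ref{t:loss} identifying the single-sample variance contribution as $d\,L(\w^*)$. The only difference is cosmetic --- you phrase the decomposition in weight space as $\tr\bigl(\X\X^\top\,\cov\bigr)$ with the cross term killed pointwise by the normal equations, while the paper works coordinate-wise with the variances of the predictions $\X^\top\of{\w^*}{S}$ and kills the cross term in expectation via Theorem~\ref{t:einv}.
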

\vspace{-.2cm}
\proof
Denote $\ybh\defeq\X^\top\w^*$ and $\yof{\ybh}S\defeq\X^\top\of{\w^*}S$
as the predictions generated by $\w^*$ and $\of{\w^*}S$ respectively. We  
perform bias-variance decomposition of the loss of $\of{\w^*}S$ (for
size $d$ volume sampling):
\begin{align*}\E[L(\of{\w^*}S)] &= \E[\|\yof{\ybh}S -
                             \y\|^2]=\E[\|\yof{\ybh}S - \ybh + \ybh - \y\|^2] \\
&=\E[\|\yof{\ybh}S - \ybh\|^2]
+ \E[2(\yof{\ybh}S-\ybh)^\top(\ybh-\y)] 
+ \|\ybh-\y\|^2\\
&\overset{(*)}{=}  \sum_{i=1}^n\E\left[(\yh{\ofsub{S}_i} - 
  \E[\yh\ofsub{S}_i])^2\right] + L(\w^*)=
\sum_{i=1}^n\Var[\yh{\ofsub{S}_i}] + L(\w^*),
\end{align*}
where $(*)$ follows from Theorem \ref{t:einv}. Now, we use
Theorem \ref{t:loss} to obtain the total variance of predictions:
\begin{align*}
\sum_{i=1}^n\Var[\yh{\ofsub{S}_i}] =\E[L(\of{\w^*}S)] - L(\w^*) =  d\;L(\w^*).
\end{align*}
Now the expected loss of the average weight vector
wrt sampling $k$ independent sets $S_1,...,S_k$ is: 
\begin{align*}
\hspace{0.5cm}\E \left[L\left(
           \frac{1}{k}\sum_{j=1}^k\of{\w^*}{S_j}
   \right)\right]
&= \sum_{i=1}^n\Var\left[\frac{1}{k}\sum_{j=1}^k\yh{\ofsub{S_j}_i}\right]
+L(\w^*)
 \\
&=  \frac{1}{k^2}\left(\sum_{j=1}^k d\,L(\w^*)\right) +
  L(\w^*)=\left(1 + \frac{d}{k}\right)L(\w^*). \hspace{2cm}\BlackBox
\end{align*}

It is worth noting that the average weight vector used in Theorem
\ref{t:repeated-sampling} is not expected to perform better than
taking the solution to the joint subproblem, $\of{\w^*}{S_{1:k}}$, where
$S_{1:k}= S_1\cup ...\cup S_k$. However, theoretical guarantees
for that case are not yet available.

\subsection{\bf Proof of Theorem \ref{t:loss}}
\label{sec:proof-loss}
We use the following lemma regarding the leave-one-out loss for
linear regression \cite{prediction-learning-games}:
\begin{lemma}\label{lm:leave-one-out}
Let $\of{\w^*}{-i}$ denote the least squares solution for problem
$(\X_{-i},\y_{-i})$. Then, we have
\begin{align*}
L(\w^*) =L(\of{\w^*}{-i}) - \x_i^\top(\X\X^\top)^{-1}\x_i \;
  \ell_i(\of{\w^*}{-i}), \quad\text{ where }\quad \ell_i(\w) \defeq (\x_i^\top\w - y_i)^2.
\end{align*}
\end{lemma}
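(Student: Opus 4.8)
The plan is to reduce the whole statement to a single rank-one update and one Sherman--Morrison identity. Write $L(\w)=L_{-i}(\w)+\ell_i(\w)$, where $L_{-i}(\w)\defeq\sum_{j\neq i}(\x_j^\top\w-y_j)^2=\|\X_{-i}^\top\w-\y_{-i}\|^2$ is the leave-one-out loss, minimized by $\of{\w^*}{-i}$. The only structural facts needed are $\X\X^\top=\X_{-i}\X_{-i}^\top+\x_i\x_i^\top$ and $\X\y=\X_{-i}\y_{-i}+\x_i y_i$, so that Sherman--Morrison applies to $(\X\X^\top)^{-1}$ in the closed form $\w^*=(\X\X^\top)^{-1}\X\y$ (using $\X^{+\top}=(\X\X^\top)^{-1}\X$). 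Expanding and collecting terms, I expect the clean rank-one correction
\[\w^*=\of{\w^*}{-i}-\frac{r_i}{1+g_i}\,(\X_{-i}\X_{-i}^\top)^{-1}\x_i,\]
where $r_i\defeq\x_i^\top\of{\w^*}{-i}-y_i$ is the predicted residual at $i$ and $g_i\defeq\x_i^\top(\X_{-i}\X_{-i}^\top)^{-1}\x_i$.

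From this I would extract two facts. First, left-multiplying the correction by $\x_i^\top$ gives $\x_i^\top\w^*-y_i=r_i/(1+g_i)$; combined with the Sherman--Morrison identity $\x_i^\top(\X\X^\top)^{-1}\x_i=g_i/(1+g_i)$ this shows the full-data residual at $i$ equals $(1-h_{ii})\,r_i$ with $h_{ii}\defeq\x_i^\top(\X\X^\top)^{-1}\x_i$, and hence $\ell_i(\w^*)=(1-h_{ii})^2\,\ell_i(\of{\w^*}{-i})$. Second, since $L_{-i}$ is a quadratic with Hessian $2\,\X_{-i}\X_{-i}^\top$ minimized at $\of{\w^*}{-i}$, completing the square gives $L_{-i}(\w^*)=L_{-i}(\of{\w^*}{-i})+(\w^*-\of{\w^*}{-i})^\top\X_{-i}\X_{-i}^\top(\w^*-\of{\w^*}{-i})$, and substituting the rank-one correction collapses the quadratic form to $g_i\,r_i^2/(1+g_i)^2$.

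Assembling the pieces: $L(\w^*)=L_{-i}(\w^*)+\ell_i(\w^*)=L_{-i}(\of{\w^*}{-i})+\frac{g_i r_i^2}{(1+g_i)^2}+\frac{r_i^2}{(1+g_i)^2}=L_{-i}(\of{\w^*}{-i})+\frac{r_i^2}{1+g_i}$, while $L(\of{\w^*}{-i})=L_{-i}(\of{\w^*}{-i})+r_i^2$. Subtracting yields $L(\of{\w^*}{-i})-L(\w^*)=r_i^2\,\frac{g_i}{1+g_i}=h_{ii}\,r_i^2=\x_i^\top(\X\X^\top)^{-1}\x_i\;\ell_i(\of{\w^*}{-i})$, which rearranges to exactly the claimed identity.

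The main obstacle is the bookkeeping in the first step: one must expand the Sherman--Morrison product applied to $\X_{-i}\y_{-i}+\x_i y_i$ and verify that the two $y_i$-dependent pieces combine so that the coefficient multiplying $(\X_{-i}\X_{-i}^\top)^{-1}\x_i$ is exactly $-r_i/(1+g_i)$; everything afterward is a one-line computation. An alternative but essentially equivalent route is to work directly with the residual-projection forms $L(\w^*)=\y^\top(\I-\X^\top(\X\X^\top)^{-1}\X)\y$ and its leave-one-out counterpart, but the rank-one update above is the most transparent and relies only on Sherman--Morrison, which already recurs throughout the paper.
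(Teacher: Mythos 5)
Your proposal is correct, and every step checks out: expanding $(\X\X^\top)^{-1}\X\y$ with $\X\X^\top=\X_{-i}\X_{-i}^\top+\x_i\x_i^\top$ and Sherman--Morrison does yield the rank-one correction $\w^*=\of{\w^*}{-i}-\frac{r_i}{1+g_i}(\X_{-i}\X_{-i}^\top)^{-1}\x_i$ (the two $y_i$-terms combine to give exactly the coefficient $-r_i/(1+g_i)$), the residual scaling $\x_i^\top\w^*-y_i=r_i/(1+g_i)$ together with $\x_i^\top(\X\X^\top)^{-1}\x_i=g_i/(1+g_i)$ is right, the quadratic expansion of $L_{-i}$ about its minimizer is exact, and the final bookkeeping gives $L(\of{\w^*}{-i})-L(\w^*)=\frac{g_i}{1+g_i}\,r_i^2=\x_i^\top(\X\X^\top)^{-1}\x_i\,\ell_i(\of{\w^*}{-i})$, which is the claimed identity. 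Note that the paper does not prove this lemma at all; it is quoted from the cited reference on prediction with expert advice, so there is no in-paper argument to compare against --- your Sherman--Morrison derivation is the standard leave-one-out residual computation and serves as a valid self-contained proof. The only caveat is that you implicitly assume $\X_{-i}\X_{-i}^\top$ is invertible (needed both for $g_i$ to be finite and for $\of{\w^*}{-i}$ to be the unique least-squares solution); this is the natural reading of the lemma, and in the paper's application of it (the proof of Theorem \ref{t:loss}) the rank-deficient case is excluded and handled separately, so your proof covers exactly the regime in which the lemma is used.
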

When $\X$ has $d+1$ columns and $\X_{-i}$ is a
full-rank $d\times d$ matrix, then $L(\of{\w^*}{-i}) = \ell_i(\of{\w^*}{-i})$ and Lemma
\ref{lm:leave-one-out} leads to the following:
\vspace{-3mm}
\begin{align}
 \det(\Xs\Xs^\top) &\overset{(1)}{=} \det(\X\X^\top)\overbrace{\|\ybh - \y\|^2}^{L(\w^*)} 
\qquad \text{ where } \Xs=\left(\!\!\!\begin{array}{c}
\X \\
\y^\top  \end{array}\!\!\!\!\right) \nonumber\\
&\overset{(2)}{=} \det(\X\X^\top)
                     (1-\x_i^\top(\X\X^\top)^{-1}\x_i)
                     \ell_i(\of{\w^*}{-i}) \nonumber\\
&\overset{(3)}{=} \det(\X_{-i}\X_{-i}^\top) \ell_i(\of{\w^*}{-i}),\label{eq:simple-lemma}
\end{align}
where (1) is the ``base $\times$ height'' formula for volume, (2)
follows from Lemma \ref{lm:leave-one-out} and (3) follows from a
standard determinant formula.
Returning to the proof, our goal is to find the expected loss $\E[L(\of{\w^*}S)]$, where $S$
is a size $d$ volume sampled set.
First, we rewrite the expectation as follows:
\begin{align}
\E[L(\of{\w^*}S)] &= \sum_{S,|S|=d} P(S) L(\of{\w^*}S)
=\sum_{S,|S|=d} P(S) \sum_{j=1}^n \ell_j(\of{\w^*}S)\nonumber\\
&=\sum_{S,|S|=d}\sum_{j\notin S} P(S)\;\ell_j(\of{\w^*}S)
=\sum_{T,|T|=d+1}\sum_{j\in T}P(T_{-j})\;\ell_j(\of{\w^*}{T_{-j}}). \label{eq:sum-swap}
\end{align}
We now use (\ref{eq:simple-lemma}) on the matrix $\X_T$ and 
test instance $\x_j$ (assuming $\rank(\X_{T_{-j}})=d$):
\begin{align}
    \label{eq:summand}
P(T_{-j})\;\ell_j(\of{\w^*}{T_{-j}}) =
\frac{\det(\X_{T_{-j}}\X_{T_{-j}}^\top)}{\det(\X\X^\top)}\;\ell_j(\of{\w^*}{T_{-j}}) =
\frac{\det(\Xs_T \Xs_T^\top)}{\det(\X\X^\top)}.
\end{align}
Since the summand does not depend on the index $j\in T$,
the inner summation in (\ref{eq:sum-swap}) becomes a multiplication
by $d+1$.  This lets us write the expected loss as:  
\begin{align}
    \label{eq:th-cauchy-binet}
\E[L(\of{\w^*}S)] = \frac{d+1}{\det(\X\X^\top)}
\sum_{T,|T|=d+1}\!\!\det(\Xs_T \Xs_T^\top)  
\overset{(1)}{=} (d+1)\frac{\det(\Xs\Xs^\top)}{\det(\X\X^\top)}
\overset{(2)}{=} (d+1)\,L(\w^*), 
\end{align}
where (1) follows from the Cauchy-Binet formula
and (2) is an application of the ``base $\times$ height'' formula.
If $\X$ is not in general position, then for some summands in \eqref{eq:summand},
$\rank(\X_{T_{-j}})<d$ and $P(T_{-j})=0$.
Thus the left-hand side of \eqref{eq:summand} is $0$, while the right-hand
side is non-negative, so \eqref{eq:th-cauchy-binet} becomes an inequality,
completing the proof of Theorem \ref{t:loss}.



\section{Efficient algorithm for volume sampling}
\label{sec:algorithm}

In this section we propose an algorithm for efficiently performing
exact volume sampling for any $s\geq d$. This addresses the
question posed by \cite{avron-boutsidis13}, asking for a
polynomial-time algorithm for the case when
$s>d$. \cite{efficient-volume-sampling,more-efficient-volume-sampling}
gave an algorithm for the case when $s=d$, which runs in time
$O(nd^3)$. Recently, \cite{dual-volume-sampling} offered an algorithm
for arbitrary $s$, which has complexity $O(n^4 s)$. We propose a new method, which uses
our techniques to achieve the time complexity $O((n-s+d)nd)$, a direct
improvement over \cite{dual-volume-sampling} by a factor of at least
$n^2$. Our algorithm also offers an improvement for
$s=d$ in certain regimes. Namely, when $n=o(d^2)$, then our algorithm
runs in time $O(n^2d)=o(nd^3)$, faster than the method proposed by
\cite{efficient-volume-sampling}.

Our algorithm implements reverse iterative sampling from Corollary \ref{c:vol}. 
After removing $q$ columns, we are left with an index set
of size $n-q$ that is distributed according to volume sampling 
for column set size $n-q$.

\begin{theorem}
The sampling algorithm runs in time $O((n-s+d)nd)$,
using $O(d^2+n)$ additional memory,
and returns set $S$ which is distributed according to size
$s$ volume sampling over $\X$.
\end{theorem}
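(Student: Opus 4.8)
The plan is to split the statement into a correctness claim and a running-time/memory claim; the former is essentially a restatement of Corollary~\ref{c:vol}, and the latter is a careful accounting of incremental linear-algebra updates.

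\textbf{Correctness.} The algorithm is exactly the reverse iterative process of the dag in Section~\ref{sec:pseudo-inverse}: it starts at the root $\{1..n\}$ and, as long as the current set $S$ has size exceeding $s$, deletes an index $i\in S$ with probability $P(\Sm|S)=\frac{1-\x_i^\top\XinvS\x_i}{|S|-d}$, which by Corollary~\ref{c:vol} is a genuine probability vector. The probability that the walk reaches a given set $S$ of size $s$ is the sum of the path probabilities from the root to $S$, and Corollary~\ref{c:vol} evaluates this to $P(S)=\det(\X_S\X_S^\top)/({n-d \choose s-d}\det(\X\X^\top))$, i.e.\ exactly size-$s$ volume sampling. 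Rank-deficient sets cause no trouble: they receive zero probability along every path reaching them, and since the one-step weight of any $j$ with $\det(\X_{\Sm}\X_{\Sm}^\top)=0$ equals $1-\x_j^\top\XinvS\x_j=0$, the walk automatically stays within full-rank sets, consistent with the conventions of Corollary~\ref{c:vol} and Theorem~\ref{t:einvs}.

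\textbf{Efficiency.} The state the algorithm maintains is the $d\times d$ matrix $\Z\defeq\XinvS$ for the current set $S$ together with the scores $h_i\defeq\x_i^\top\Z\x_i$ for $i\in S$. Given the $h_i$, sampling the next index to delete (with probability proportional to $1-h_i$) and evaluating the normalizer take $O(|S|)\le O(n)$ time. When index $j$ is removed, Sherman--Morrison applied to $\X_{\Sm}\X_{\Sm}^\top=\X_S\X_S^\top-\x_j\x_j^\top$ gives
\[
\Z_{\mathrm{new}}=\Z+\frac{\Z\x_j\x_j^\top\Z}{1-h_j},
\qquad
h_i^{\mathrm{new}}=h_i+\frac{(\x_i^\top\Z\x_j)^2}{1-h_j}.
\]
The vector $\Z\x_j$ costs $O(d^2)$, updating $\Z$ costs $O(d^2)$, and all the inner products $\x_i^\top(\Z\x_j)$ over the remaining $i$ form a single $O(nd)$ matrix--vector product; so each of the $n-s$ deletion steps runs in $O(nd)$. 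The one-time preprocessing — forming $\X\X^\top$, inverting it, and computing the initial $h_i=\x_i^\top(\X\X^\top)^{-1}\x_i$ — costs $O(nd^2)$. Adding up, the total is $O(nd^2+(n-s)nd)=O((n-s+d)nd)$ since $d\le n-s+d$. For memory, $\Z$ takes $O(d^2)$, the array $\{h_i\}$ and the record of which indices remain take $O(n)$, and nothing else of superlinear size is stored, giving $O(d^2+n)$ additional memory beyond the input matrix.

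\textbf{Main obstacle.} The delicate point is the $O(nd)$-per-step update: one must observe that \emph{all} the scores $h_i$ can be refreshed from the single shared product $\Z\x_j$ followed by $|S|$ scalar operations, rather than recomputing each $\x_i^\top\Z\x_i$ from scratch (which would cost $O(nd^2)$ per step and inflate the bound by a factor $d$). The remaining ingredients are routine: correctness is inherited directly from the dag framework of Lemma~\ref{l:key} and Corollary~\ref{c:vol}, and vanishing denominators $1-h_j$ are handled for free because such indices already carry one-step probability zero.
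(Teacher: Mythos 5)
Your proposal is correct and matches the paper's proof in all essentials: correctness comes from identifying the algorithm with the reverse iterative sampling of Corollary~\ref{c:vol}, with Sherman--Morrison used to show that the maintained matrix $\Z=(\X_S\X_S^\top)^{-1}$ and scores are updated consistently (the paper phrases this as a loop invariant, you phrase it as deriving the update formulas, but it is the same calculation), and the runtime/memory accounting ($O(nd^2)$ initialization, $O(nd)$ per deletion, $O(d^2+n)$ state) is identical. Your extra remarks on rank-deficient sets and vanishing denominators are a welcome clarification but do not change the argument.
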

\begin{proof}
For correctness we show the following invariants
that hold at the beginning of the {\bf while} loop:
\begin{align*}
p_i = 1 - \x_i^\top(\X_S\X_S^\top)^{-1}\x_i = (|S|-d)\,P(\Sm|S)
\qquad
\text{and}
\qquad
\Z= (\X_S\X_S^\top)^{-1}.
\end{align*}
At the first iteration the invariants trivially hold. 
When updating the $p_j$ we use $\Z$ and the $p_i$ 
from the previous iteration, so we can rewrite the update as

\begin{wrapfigure}{R}{0.33\textwidth}
\vspace{-.6cm}
\renewcommand{\thealgorithm}{}
\begin{minipage}{0.33\textwidth}
\floatname{algorithm}{}
\begin{algorithm}[H] 
{\fontsize{8}{8}\selectfont
  \caption{\bf \small \hspace{-.2cm}Reverse iterative volume sampling}
  \begin{algorithmic}
    \STATE \textbf{Input:} $\X\!\in\!\R^{d\times n}$, $s\!\in\!\{d..n\}$ 
    \STATE $\Z\leftarrow (\X\X^\top)^{-1}$\label{line:inv}   
    \STATE $\forall_{i\in\{1..n\}} \quad p_i\leftarrow 1-\x_i^\top \Z\x_i$
    \STATE $S \leftarrow \{1,..,n\}$
    \STATE {\bf while} $|S|>s$
    \STATE \quad Sample $i \propto p_i$ out of $S$
    \STATE \quad $S\leftarrow S - \{i\}$
    \STATE \quad $\v \leftarrow \Z\x_i /\sqrt{p_i}$
    \STATE \quad $\forall_{j\in S}\quad  p_j\leftarrow p_j -  (\x_j^\top\v)^2$
    \STATE \quad $\Z \leftarrow \Z + \v\v^\top$  
    \STATE {\bf end} 
    \RETURN $S$
 \end{algorithmic}
\label{alg:sampling}
}
\end{algorithm}
\end{minipage}
\end{wrapfigure}
\begin{align*}
&p_j \leftarrow p_j - (\x_j^\top\v)^2 \\
&= 1- \x_j^\top(\X_S\X_S^\top)^{-1}\x_j -
      \frac{(\x_j^\top\Z\x_i)^2}{1-\x_i^\top(\X_S\X_S^\top)^{-1}\x_i}\\
&=1- \x_j^\top(\X_S\X_S^\top)^{-1}\x_j -
  \frac{\x_j^\top(\X_S\X_S^\top)^{-1}\x_i\x_i^\top(\X_S\X_S^\top)^{-1}\x_j}
{1-\x_i^\top(\X_S\X_S^\top)^{-1}\x_i}\\
&=1 - \x_j^\top\left( (\X_S\X_S^\top)^{-1} +
  \frac{(\X_S\X_S^\top)^{-1}\x_i\x_i^\top(\X_S\X_S^\top)^{-1}}
  {1-\x_i^\top(\X_S\X_S^\top)^{-1}\x_i}\right)\x_j  \\
&\overset{(*)}{=}
1- \x_j^\top(\X_\Sm\X_\Sm)^{-1}\x_j =(|S|-1-d)\,P(S_{-i,j}|\Sm),
\\[-.35cm]
\end{align*}
where $(*)$ follows from the Sherman-Morrison formula.
The update of $\Z$ is also an application of Sherman-Morrison 
and this concludes the proof of correctness. 

Runtime: Computing the
initial $\Z=(\X\X^\top)^{-1}$ takes $O(nd^2)$, as does
computing the initial values of $p_j$'s. Inside the \textbf{while}
loop, updating $p_j$'s takes $O(|S| d)=O(nd)$ and updating $\Z$ takes
$O(d^2)$. The overall runtime becomes $O(nd^2 + (n-s)nd) =
O((n-s+d)nd)$. The space usage (in addition to the input data) is
dominated by the $p_i$ values and matrix $\Z$.
\end{proof}

\section{Conclusions}
\label{sec:conclusions}
We developed exact formulas for 
$\E[(\X\I_S)^+)]$ and $\E[(\X\I_S)^+)^2]$
when the subset $S$ of $s$ column indices
is sampled proportionally to the volume $\det(\X_S\X_S^\top)$.
The formulas hold for any fixed size $s\in \{d..n\}$. 
These new expectation formulas imply that the solution $\of{\w^*}S$ 
for a volume sampled subproblem of a linear regression problem is
unbiased. We also gave a formula relating the loss of the subproblem 
to the optimal loss (ie $\E(L(\of{\w^*}S))=(d+1)L(\w^*)$). However, this
result only holds for sample size $s=d$. It is an open problem 
to obtain such an exact expectation formula for $s>d$.

A natural algorithm is to draw $k$ samples $S_i$ of size $d$
and return $\of{\w^*}{S_{1:k}}$, where $S_{1:k}=\bigcup_iS_i$.
We were able to get exact expressions for the
loss $L(\frac{1}{k} \sum_i \of{\w^*}{S_i})$ of the average
predictor but it is an open problem to get
nontrivial bounds for the loss of the best predictor $\of{\w^*}{S_{1:k}}$.

We were able to show that for small sample sizes, volume
sampling a set jointly has the advantage: It achieves
a multiplicative bound for the smallest sample size $d$,
whereas any independent sampling routine requires sample
size at least $\Omega(d \log d)$.

We believe that our results demonstrate a fundamental connection
between volume sampling and linear regression, which demands further
exploration. Our loss expectation formula has already been applied by
\cite{regression-correspondence} to the task of linear regression
without correspondence. 

\paragraph{Acknowledgements}
Thanks to Daniel Hsu and Wojciech Kot{\l}owski for many
valuable discussions.
This research was supported by NSF grant IIS-1619271.

\clearpage
\newpage
\bibliographystyle{plain}
\bibliography{pap}


\clearpage
\newpage

\end{document}